\documentclass[letterpaper]{article} % DO NOT CHANGE THIS
\usepackage[preprint]{aaai2027}  % DO NOT CHANGE THIS
\usepackage[hyphens]{url}  % DO NOT CHANGE THIS
\usepackage{graphicx} % DO NOT CHANGE THIS
\usepackage{natbib}  % DO NOT CHANGE THIS AND DO NOT ADD ANY OPTIONS TO IT
\usepackage{caption} % DO NOT CHANGE THIS AND DO NOT ADD ANY OPTIONS TO IT
\usepackage{algorithm}
\usepackage{algorithmic}

\usepackage{newfloat}
\usepackage{listings}
\DeclareCaptionStyle{ruled}{labelfont=normalfont,labelsep=colon,strut=off} % DO NOT CHANGE THIS
\floatstyle{ruled}
\newfloat{listing}{tb}{lst}{}
\floatname{listing}{Listing}

\usepackage{booktabs}

\usepackage{mathtools} % amsmath with fixes and additions
\usepackage{booktabs} % commands to create good-looking tables
\usepackage{tikz} % nice language for creating drawings and diagrams
\usepackage[capitalize,noabbrev]{cleveref} % automatically adds type of the reference
\usepackage{amsthm}
\usepackage{thmtools} 
\usepackage{thm-restate}

\usepackage{lipsum}

\usepackage{subcaption}

\usepackage{listings}

\usepackage{tikz}
\usepackage{tcolorbox}

\usepackage{tabularx}

\usepackage{amsmath,amsfonts,bm}

\def\1{\bm{1}}
\newcommand{\train}{\mathcal{D}}

\def\eps{{\epsilon}}

\def\vk{{\bm{k}}}

\def\vw{{\bm{w}}}
\def\vx{{\bm{x}}}
\def\vy{{\bm{y}}}

\def\mB{{\bm{B}}}

\def\mI{{\bm{I}}}

\def\mK{{\bm{K}}}

\DeclareMathAlphabet{\mathsfit}{\encodingdefault}{\sfdefault}{m}{sl}
\SetMathAlphabet{\mathsfit}{bold}{\encodingdefault}{\sfdefault}{bx}{n}

\def\gE{{\mathcal{E}}}

\def\gN{{\mathcal{N}}}

\def\gS{{\mathcal{S}}}

\def\gX{{\mathcal{X}}}

\def\sR{{\mathbb{R}}}

\newcommand{\E}{\mathbb{E}}

\newcommand{\R}{\mathbb{R}}

\newcommand{\Cov}{\mathrm{Cov}}
\DeclareMathOperator*{\argmax}{arg\,max}

\newcommand{\D}{\mathcal{D}}

\newcommand{\xs}{\vx_*}
\newcommand{\f}[2]{f^{(#1)}_{#2}}
\newcommand{\set}[3]{\smash{\{#3\}_{#1}^{#2}}}
\newcommand{\K}{\mathbf{K}}
\newcommand{\Sobs}{\gS_\text{obs}}
\newcommand{\Shid}{\gS_\text{obs}^\mathsf{c}}
\newcommand{\fpar}{\smash{f^{(M)}_\parallel}}
\newcommand{\fperp}{\smash{f^{(M)}_\perp}}

\definecolor{onlinegreen}{RGB}{182, 215, 168} % Forest Green to match the left panel
\definecolor{offlinered}{RGB}{243, 204, 204}  % Firebrick Red to match the right panel
\definecolor{mydarkblue}{rgb}{0,0.08,0.45}

\newcommand{\tightcolorbox}[2]{%
  {\setlength{\fboxsep}{1pt}\colorbox{#1}{#2}}%
}

\definecolor{WOrchid}{RGB}{154, 100, 246}

\theoremstyle{plain}
\newtheorem{theorem}{Theorem}[section]

\theoremstyle{definition}
\newtheorem{definition}[theorem]{Definition}

\theoremstyle{remark}
\newtheorem{remark}[theorem]{Remark}

\crefname{proposition}{prop.}{props.}
\Crefname{proposition}{Prop.}{Props.}

\crefname{theorem}{thm.}{thms.}
\Crefname{theorem}{Thm.}{Thms.}

\title{Out-Of-The-Loop Multi-Fidelity Bayesian Optimization}
\author {
    Gustavo Sutter\textsuperscript{\rm 1,\rm 2,\rm \dag},
    Hao Wang\textsuperscript{\rm 3},
    Luis Ricardez-Sandoval\textsuperscript{\rm 3,\rm 4},
    Pascal Poupart\textsuperscript{\rm 1,\rm 2},
    Agustinus Kristiadi\textsuperscript{\rm 2,\rm 5}
}
\affiliations {
    \textsuperscript{\rm 1} Cheriton School of Computer Science, University of Waterloo, Waterloo, ON, Canada\\
    \textsuperscript{\rm 2} Vector Institute, Toronto, ON, Canada\\
    \textsuperscript{\rm 3} Department of Chemical Engineering, University of Waterloo, Waterloo, ON, Canada\\
    \textsuperscript{\rm 4} Waterloo Institute for Nanotechnology, University of Waterloo, Waterloo, ON, Canada\\
    \textsuperscript{\rm 5} Department of Computer Science, Western University, London, ON, Canada
}

\begin{document}

\maketitle

\renewcommand{\thefootnote}{\ensuremath{\dagger}}
\footnotetext{Corresponding author: \texttt{gsutterp@uwaterloo.ca}}
\renewcommand{\thefootnote}{\arabic{footnote}}

\begin{abstract}
Black-box optimization is a ubiquitous problem in science and engineering, often dealing with expensive objective functions with cheaper lower-fidelity proxies available.
Multi-fidelity Bayesian optimization (MF-BO) is a principled approach to this problem, leveraging correlations across different fidelities when querying the objective.
However, for many important MF-BO tasks, the true highest-fidelity function is prohibitively expensive to be part of the optimization loop.
Nevertheless, practitioners often have gold standard data (observations of the highest-fidelity function) obtained from previous experiments that might provide information for the current task.
For instance, in molecular optimization, chemists often pick the top-$k$ candidate molecules using various computer simulations, and later reveal their \emph{true} objective function values.
In this work, we demonstrate the suboptimality of standard MF-BO algorithms in the real-world scenarios above, even under ideal assumptions.
Next, we mitigate this problem by incorporating historical high-fidelity data accompanied by task descriptors---which can be explicitly given or extracted from unstructured metadata.
We demonstrate the effectiveness of our methods on synthetic functions, as well as real-world problems in chemistry and hyperparameter optimization.
\end{abstract}

% Uncomment the following to link to your code, datasets, an extended version or similar.
% You must keep this block between (not within) the abstract and the main body of the paper.
% Make sure that you do not de-anonymize yourself with these links.
% \begin{links}
%     \link{Anonymized Code}{}
% \end{links}

\section{Introduction}
\label{sec:intro}

\begin{figure}[t]
    \centering
    \includegraphics[width=\linewidth]{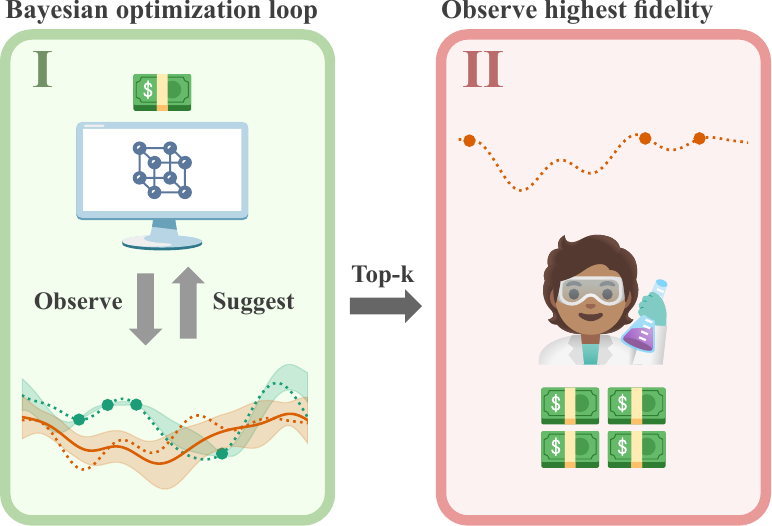}
    \caption{Visualization of the out-of-the-loop multi-fidelity setting, emphasizing the distinction between the \tightcolorbox{onlinegreen}{in-the-loop} and \tightcolorbox{offlinered}{out-of-the-loop} fidelities. \textbf{(I)} The \tightcolorbox{onlinegreen}{online} Bayesian optimization loop actively queries lower-fidelity functions. \textbf{(II)} The \tightcolorbox{offlinered}{offline} stage takes the final top-$k$ suggestions \textit{out of the loop} to be observed on the highest-fidelity objective.} 
    \label{fig:one}
    \vspace{-0.85em}
\end{figure}

A wide array of problems in chemistry~\citep{griffiths2020constrained,greenaway2023alchemist,Muthyala2026bodenovo}, biology~\citep{romero2013navigating,ruberg2023bayesdrug,martens2025holisticbioprocessdevelopmentscales}, and engineering \citep{ament2023sustainable,feurer2022auto,lam2018boaerospace} consists of optimizing black-box functions.
Such objectives often have prohibitively large input domains and are expensive to compute.
This creates the need for data-efficient black-box optimization algorithms that are able to find the optimal input point by querying the objective function as few times as possible.
Bayesian optimization \citep[BO;][]{Mockus1975,garnett_bayesoptbook_2023} is a principled way to solve this problem following the principles of Bayesian decision theory.

One solution to expensive objective function evaluations is the use of functions with different cost-fidelity trade-offs as proxies for the true, prohibitive objective. 
This setting is addressed by multi-fidelity Bayesian optimization \citep[MF-BO;][]{kennedy2000predicting,Huang2006cokriging} algorithms, where costs and correlations between proxies and the true objective are considered when selecting the next point to query in the BO loop.
For example, one can employ numerical simulators or machine learning models instead of performing wet-lab experiments~\citep{guan2022mlcatalysis}.

However, in many applications, the highest fidelity level is prohibitively expensive, so practitioners often do not include it in the BO loop.
Instead, the highest level available during online exploration is not the exact gold standard that is being optimized.
Usually, practitioners run optimization on the best available fidelity and evaluate the obtained maximizers on the real objective function of interest.
Notably, the results of previous experiments are saved for future reference and experiments, which can be exploited by future optimization campaigns.
For example, the highest fidelity in material design is the physical experiment in the laboratory, but often BO is run considering density functional theory \citep[DFT;][]{Argaman2000DFT} as the objective, with lower fidelities corresponding to lighter DFT configurations and/or machine learning interatomic potential models, such as UMA~\citep{wood2025umafamilyuniversalmodels}.
Only once the BO algorithm is done, its top suggestions are used in lab experiments, the real objective functions are computed, and the results are saved.

In this work, we first characterize the suboptimality of standard MF-BO under this setting, even assuming knowledge of the true correlation between fidelities---which is not available in real-world applications.
This highlights the need for extra information when the highest fidelity is not available in the loop.
Next, we propose a solution based on transferring information from previous optimization tasks to address the suboptimality problem.
We incorporate information from offline data collected at the end of past experiments via a multi-task multi-fidelity kernel.
Importantly, the proposed deep kernel can leverage task-specific features that are either explicitly provided or extracted from unstructured metadata using modern foundation models~\citep{Bommasani2021FoundationModels}, making use of the rich textual context available in real-world applications.
We validate our approach on synthetic benchmarks as well as real-world applications in chemistry and hyperparameter tuning, demonstrating the importance of knowledge transfer in the OOL-MF-BO setting.

\begin{figure*}[t]
    \centering
    \begin{subfigure}{0.33\linewidth}
        % \centering
        \includegraphics[width=\linewidth]{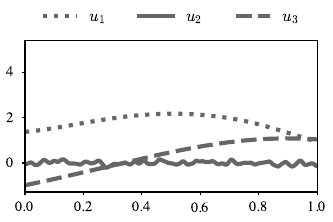}
        \caption{Latent processes}
    \end{subfigure}
    \begin{subfigure}{0.33\linewidth}
        % \centering
        \includegraphics[width=\linewidth]{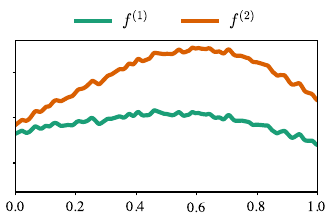}
        \caption{Objective functions}
    \end{subfigure}
    \begin{subfigure}{0.33\linewidth}
        % \centering
        \includegraphics[width=\linewidth]{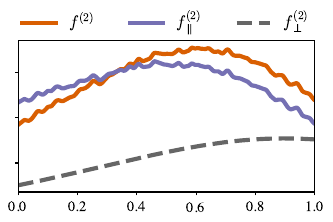}
        \caption{Target decomposition}  
    \end{subfigure}
    \caption{
    Illustration of the pitfalls of out-of-the-loop multi-fidelity optimization under the LMC kernel, with $M=2$. 
    \textbf{(a)} Three latent functions \(u_1, u_2, u_3\) sampled from GPs with different lengthscales.
    \textbf{(b)} Higher and lower fidelity functions, with $f^{(1)} = u_1 + u_2$ and $f^{(2)} = 2\,u_1 + u_2 + u_3$.
    That is, the process $u_3$ is present in the target but not in the observable fidelity.
    \textbf{(c)} The observable component $\smash{\f{2}{\|}}$ does not have the same maximizer as the target $\smash{\f{2}{}}$ due to the shift imposed by $\smash{\f{2}{\perp}}$.}
    \label{fig:thm}
\end{figure*}

The contributions of this work are as follows:
\begin{itemize}
    \item We are, to the best of our knowledge, the first to formalize the OOL-MF-BO problem.
    \item We introduce theoretical results characterizing the suboptimality of OOL-MF-BO even if the true correlations between fidelities are known.
    \item We introduce a method that leverages historical experimental data and task-specific features---extracted from either structured metadata or foundation models---to escape the aforementioned suboptimality, leading to a reduction in cumulative regret in multiple benchmarks.
    \item We show empirical results in the OOL-MF-BO setting across standard synthetic functions and on real-world chemistry and hyperparameter optimization benchmarks.
\end{itemize}

\section{Preliminaries}
\label{sec:background}

Let $f : \gX \rightarrow \sR$ denote an unknown objective function on a space \(\gX \subset \sR^d\).
The goal of black-box optimization is to find an optimal point $\xs \in \argmax_{\vx \in \gX} f(\vx)$ while assuming that $f$ is expensive to compute and the input domain cannot be explored exhaustively.
The objective function is accessed via noisy observations \(y = f(\vx) + \epsilon\) with \(\eps \sim \gN(0,\sigma_n^2)\), stored in the observation dataset \(\train_t = \set{i=1}{t}{(\vx_i,y_i)}\).

\subsection{Bayesian Optimization}

Bayesian optimization~\citep{Mockus1975,garnett_bayesoptbook_2023} provides a principled solution for black-box optimization problems based on two main components: a surrogate model \(p(f|\train_t)\) and an acquisition function \(\alpha(\vx;\train_t)\).
The surrogate model represents the current posterior belief over the unknown objective function, which is used by the acquisition function to evaluate candidate input locations and select the next query point according to \(\vx_{t+1} = \argmax_{\gX}\alpha(\vx;\train_t)\).
The optimization loop terminates after a predefined number of iterations \(T\) is reached.
Then, the algorithm returns its final choice \(\hat{\vx}_T = \argmax_{\vx_i \in \D_T} y_i\), given by the observed point with the highest objective value.

\paragraph{Multi-Fidelity Bayesian Optimization}

The multi-fidelity extension of the Bayesian optimization framework assumes that we have access to several functions in  \(\set{m=1}{M}{\f{m}{}:\gX \to \R}\) with the highest fidelity function \(\f{M}{}\) being the objective to be maximized, i.e., we aim to find 
\begin{equation*}
    \xs \in \argmax_{\vx \in \gX} \f{M}{}(\vx)
\end{equation*}
The observations dataset is a collection of triples \((\vx_i, y_i, m_i)\), where \(m_i \in \{1,...,M\}\) indicates the fidelity queried on step \(i\).
For each fidelity, there is an associated querying cost \(c^{(m)} \geq 0\) (assumed to be independent of \(\vx\)).
In the multi-fidelity regime, the acquisition function is given by \(\alpha(\vx, m;\train_t)\), which also incorporates the fidelity as an input.
The algorithm runs until the optimization budget \(\Lambda\) is exhausted. 
Both the surrogate model and the acquisition function are adapted to deal with a set of correlated functions.
Analogous to the single-fidelity case, at termination, the algorithm makes a final suggestion.
Usually, this is done by suggesting the observed point from the highest fidelity with the highest objective value, i.e. \(\hat{\vx}_T = \argmax_{\vx_i \in \D_{T,m}} y_i\) where \(\D_{T,m} = \{(\vx_i, y_i, m_i): m_i = M\}\). 
Another option is to suggest \(\hat{\vx}_T = \argmax_{\vx \in \gX} \E[\f{M}{}(\vx)|\train_T]\), allowing for points that were not observed in the highest fidelity level or not observed at all.

\subsection{Gaussian Processes}

Gaussian processes \citep[GPs;][]{Rasmussen_Williams_gp_2006} are the standard choice for surrogate modelling in BO.
They define a distribution over functions specified by a mean function \(\mu:\gX \to \R\) and a covariance kernel \(k:\gX \times \gX \to \R\).
Given a dataset \(\train_t\) with inputs and noisy observations, a GP prior \(f \sim \mathcal{GP}(\mu, k)\) induces a multivariate Gaussian distribution over the unseen function values and observed data \(\train_t\).
Conditioning on the data yields a GP posterior whose predictive distribution at a test point \(\vx\) is Gaussian with closed-form mean \(\mu_t(\vx)\) and variance \(\sigma_t(\vx)\).

\paragraph{Vector-valued Gaussian Process}
When dealing with vector-valued functions \(f:\gX \to \R^M\), it is still possible to use GPs.
Under this setting, it is assumed that \(f(x) = (f_1(x),\dots,f_M(x))^\top\) follows a vector-valued GP~\citep{alvarez2012vectorkernel},
fully characterized by a mean function \(\boldsymbol{\mu}:\gX \to \R^M\) and a matrix-valued kernel
\(\mathbf{K}:\gX \times \gX \to \R^{M \times M}\).
The entries \((\K(\vx,\vx'))_{m,m'}\) in the matrix \(\K(\vx,\vx')\) correspond to the covariance between \(\f{m}{}(\vx)\) and \(\f{m'}{}(\vx')\).

A popular choice is the linear model of coregionalization~\citep[LMC;][]{journel1978mining,goovaerts1997geostats}:
\begin{equation}
\label{eq:lmc}
[\K(\vx,\vx')]_{m,m'} = \sum_{q=1}^{Q} \mB^{(q)}_{m,m'} k_q(\vx, \vx'),
\end{equation}
where \(\mB^{(q)} \in \R^{M \times M}\) is a positive definite coregionalization matrix and \(k_q\) is a kernel for \(q \in \{1,\dots,Q\}\).
This corresponds to modelling each output as a linear combination of latent processes, \(\f{m}{}(\vx) = \sum_{q=1}^{Q} a^{(m)}_{q} u_q(\vx)\), where \(u_q \sim \mathcal{GP}(0,k_q)\) are i.i.d. latent processes.

When all latent processes have the same kernel, the resulting model is called an intrinsic coregionalization model ~\citep[ICM;][]{goovaerts1997geostats}, for which the covariance is:
\begin{equation}
\label{eq:icm}
[\K(\vx,\vx')]_{m,m'} = \mB_{m,m'} k(\vx, \vx'),
\end{equation}

Another popular kernel, the multi-information source (MISO) model of \citet{poloczek2017miso}, can be written in the LMC form by taking
$Q = M$ and setting \(\f{M}{} = u_1\) and \(\f{m}{} = u_1 + u_{m+1}\) for \(m = 1, \dots, M-1\).
Note that this requires distinct kernels for each process, placing it in the LMC family but outside ICM.
\section{Pitfalls of Out-of-the-Loop Fidelity}
\label{sec:pitfalls}

\begin{figure*}[!t]
    \centering
    \includegraphics[width=\linewidth]{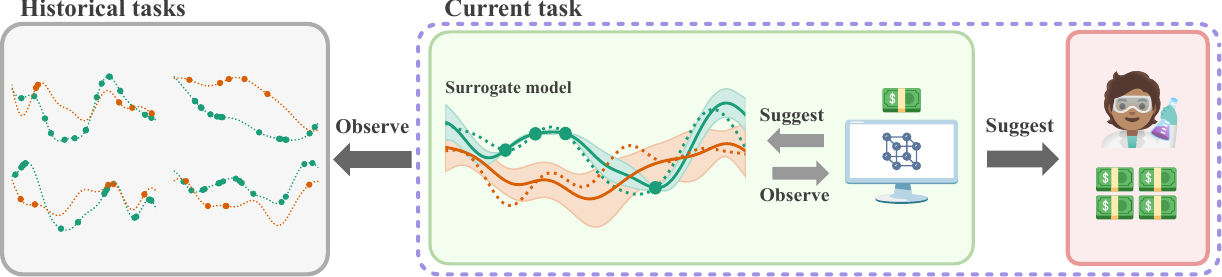}
    \caption{Proposed solution for the OOL-MF-BO setting based on data from historical tasks for which the highest fidelity was observed. The surrogate model has access to a dataset of historical tasks alongside the data being collected from the available fidelities of the current task.}
    \label{fig:task-transfer}
\end{figure*}

We start by characterizing what happens when we only have access to the lower fidelities in the BO loop.
As noted before, in many complex scientific applications the true objective function is prohibitively expensive---both computationally and financially---to be accessed within the loop.

Consider a multi-fidelity optimization problem with functions \(\set{m=1}{M}{\f{m}{}}\) drawn from a joint Gaussian process with zero mean and an LMC kernel, that is, each fidelity is a linear combination of the same set of latent processes \(\set{q=1}{Q}{u_q}\).
Let the costs \(\set{m=1}{M}{c^{(m)}}\) be such that \(c^{(M)} = \infty\), and let the budget be finite, \(\Lambda > 0\).
Therefore, we have \(m_i < M\) for all elements in \(\D_t = \{(\vx_i, y_i, m_i)\}_{i=1}^{N}\).
The final suggestion of the algorithm is given by
\begin{equation}
\label{eq:choice}
\hat{\vx}_T = \argmax_{\vx \in \gX} \E[\f{M}{}(\vx)\mid\train_T],
\end{equation}
since we are unable to query our target fidelity.

First, we partition the set of latent functions based on their presence in the lower fidelities.
Next, using the two sets in this partition, we decompose the objective function into observable and unobservable components.

\vspace{0.4em}
\begin{definition}
\label{def:part}
Let \(\Sobs = \{q\;|\;\smash{a^{(m)}_q} \neq 0 \;\text{for some}\; m < M\} \) denote the set of indices of latent processes that are observable in at least one of the lower fidelities.
In addition, define its complement \(\Shid = \{q\;|\;\smash{a^{(m)}_q} = 0 \;\text{for all}\; m < M\} \), capturing the latent processes that are not present in any observable fidelity.
\end{definition}
\vspace{0.4em}

\begin{definition}
\label{def:parandperp}
Let the target fidelity be \(\f{M}{}(\vx) = \smash{\sum_{q=1}^{Q} a^{(M)}_{q} u_q(\vx)}\).
Splitting this sum according to the partition \(\{\Sobs, \Shid\}\) of
\(\{1,\dots,Q\}\) from \Cref{def:part} yields
\[
\begin{aligned}
  \f{M}{} &= \fpar + \fperp, \quad\text{where}\\
  \fpar &:= \sum_{q \in \Sobs} a^{(M)}_q u_q,
  \quad
  \fperp := \sum_{q \in \Shid} a^{(M)}_q u_q .
\end{aligned}
\]
\end{definition}
\vspace{0.4em}

This decomposition of the objective function allows us to reason about what can be reconstructed from out-of-the-loop observations.
The key object to study under these conditions is the posterior mean \(\mu^{(M)}_t = \E[\f{M}{}\mid\D_t]\), which is used to make suggestions.

\vspace{0.4em}
\begin{restatable}{proposition}{propositionlmc}
\label{prop:lmc}
Under the LMC kernel defined in \eqref{eq:lmc}, the highest-fidelity posterior mean equals the posterior mean of the observable component: \(\mu^{(M)}_t = \E[\fpar \mid \D_t]\). The unobservable component \(\fperp\) contributes nothing.
\end{restatable}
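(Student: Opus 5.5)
By linearity of conditional expectation and \Cref{def:parandperp}, $\mu^{(M)}_t = \E[\fpar\mid\D_t] + \E[\fperp\mid\D_t]$. It therefore suffices to show $\E[\fperp(\vx)\mid\D_t]=0$ for every $\vx\in\gX$. The plan has two parts: first show that $\fperp$ is independent of the data $\D_t$, and then use the zero prior mean to conclude that its posterior mean equals its prior mean, namely zero.

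\textbf{Step 1: the data depend only on the observable processes.} Every observation in $\D_t$ has $m_i<M$, because $c^{(M)}=\infty$ and the budget $\Lambda$ is finite. Hence each observation has the form
\[
y_i = \sum_{q=1}^Q a^{(m_i)}_q u_q(\vx_i)+\eps_i = \sum_{q\in\Sobs} a^{(m_i)}_q u_q(\vx_i)+\eps_i,
\]
where the second equality uses \Cref{def:part}: $a^{(m)}_q=0$ for all $q\in\Shid$ and all $m<M$. So the observation vector $\vy$ is a measurable function of $\{u_q\}_{q\in\Sobs}$ and the noise alone.

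\textbf{Step 2: independence.} In the LMC construction the latent processes $u_q\sim\mathcal{GP}(0,k_q)$ are mutually independent, and they are independent of the Gaussian noise. Therefore the collection $\{u_q\}_{q\in\Shid}$ is independent of $\vy$. The inputs $\vx_i$ and fidelities $m_i$ may be chosen adaptively by the acquisition function, but each choice is a function of earlier observations, which Step 1 shows are functions of the observable processes and the noise. By induction over $i$, the whole dataset $\D_t$ (inputs, fidelities and outputs) is independent of $\{u_q\}_{q\in\Shid}$. It follows that $\E[\fperp(\vx)\mid\D_t]=\E[\fperp(\vx)]=\sum_{q\in\Shid}a^{(M)}_q\,\E[u_q(\vx)]=0$, using the zero prior mean.

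\textbf{Alternative check and main obstacle.} As a cross-check I would verify the claim through the explicit GP posterior formula: $\E[\fperp(\vx)\mid\D_t]=\Cov(\fperp(\vx),\vy)(\K_{\D}+\sigma_n^2\mI)^{-1}\vy$. Each entry of the cross-covariance is $\sum_{q\in\Shid}a^{(M)}_q a^{(m_i)}_q k_q(\vx,\vx_i)=0$, so the posterior mean vanishes. The main subtlety is the adaptivity of the $\vx_i$. I would handle it either by the induction argument of Step 2, or by noting that, with the design held fixed, the GP posterior given adaptively collected data coincides with the fixed-design formula (the likelihood principle). One should also note that the LMC factorization used here requires each $\mB^{(q)}=\bm a_q\bm a_q^\top$ to be rank one, matching the latent representation $\f{m}{}=\sum_q a^{(m)}_q u_q$ assumed in \Cref{def:part}.
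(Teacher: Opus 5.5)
Your proposal is correct and takes the same route as the paper: split $\mu^{(M)}_t$ by linearity into the observable and unobservable parts, then show $\E[\fperp \mid \D_t]=0$ because each $u_q$ with $q \in \Shid$ is independent of the data and has zero prior mean. The paper simply asserts $u_q \perp \D_t$, whereas you justify it, including the adaptive choice of $\vx_i, m_i$ by induction, and you add the closed-form cross-covariance check; this tightens the paper's argument rather than changing it.
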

\vspace{0.4em}

Crucially, this inability to capture information from \(\fperp\) is not resolved as more low-fidelity data is collected.
As a result, whenever the unobserved component shifts the location of the optimum, the algorithm incurs irreducible regret.

\vspace{0.4em}
\begin{restatable}{theorem}{theoremlmc}
\label{thm:lmc}
Assume \(\E[\fpar \mid \D_t] \to \fpar\) uniformly as \(t \to \infty\) and that the lower-fidelity observations become dense in \(\gX\).
Given the candidate \(\hat{\vx}_t\) selected via \eqref{eq:choice}, whenever \(\argmax \f{M}{} \neq \argmax \fpar\) the regret converges to a strictly positive constant: \( \lim\limits_{t \to \infty} (\f{M}{}(\xs) - \f{M}{}(\hat{\vx}_t)) > 0 \).
\end{restatable}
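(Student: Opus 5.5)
The argument combines \Cref{prop:lmc} with the uniform convergence assumption. I will treat a fixed sample path of the latent processes: $\f{M}{}$ and $\fpar$ are continuous functions on $\gX$. I will also assume, as the statement implicitly requires, that $\gX$ is compact, so that maxima exist.

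\textbf{Step 1: reduce to the observable component.} By \Cref{prop:lmc}, $\mu^{(M)}_t = \E[\fpar\mid\D_t]$. By assumption this converges uniformly to $\fpar$. Hence the candidate $\hat{\vx}_t \in \argmax_{\gX}\mu^{(M)}_t$ from \eqref{eq:choice} maximizes a function that is uniformly $\delta_t$-close to $\fpar$, with $\delta_t \to 0$. A standard two-sided comparison then gives
\[
\fpar(\hat{\vx}_t) \ge \max_{\gX}\fpar - 2\delta_t .
\]
So every limit point of $(\hat{\vx}_t)$ lies in $A_\parallel := \argmax \fpar$, and $\mathrm{dist}(\hat{\vx}_t, A_\parallel)\to 0$. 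This uses compactness and continuity.

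\textbf{Step 2: regret on the limit set.} By continuity of $\f{M}{}$ and compactness of $A_\parallel$,
\[
\limsup_t \f{M}{}(\hat{\vx}_t) \le \max_{A_\parallel} \f{M}{} .
\]
The hypothesis $\argmax \f{M}{} \neq \argmax \fpar$ must be read as guaranteeing that no maximizer of $\fpar$ maximizes $\f{M}{}$, i.e.\ $A_\parallel \cap \argmax \f{M}{} = \emptyset$. In the generic case both argmax sets are singletons, and this is exactly the stated inequality. Since $A_\parallel$ is compact and disjoint from the maximizers of $\f{M}{}$, we get
\[
\max_{A_\parallel}\f{M}{} < \f{M}{}(\xs) .
\]
Therefore $\liminf_t(\f{M}{}(\xs)-\f{M}{}(\hat{\vx}_t)) \ge \f{M}{}(\xs) - \max_{A_\parallel}\f{M}{} > 0$.

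\textbf{Step 3: existence of the limit.} The statement asserts that the regret converges, not just that its $\liminf$ is positive. In the generic case $A_\parallel=\{\vx_\parallel\}$, the candidates satisfy $\hat{\vx}_t\to\vx_\parallel$, and continuity gives
\[
\lim_t(\f{M}{}(\xs)-\f{M}{}(\hat{\vx}_t)) = \f{M}{}(\xs)-\f{M}{}(\vx_\parallel) > 0 .
\]
This holds almost surely under the GP prior, since sample paths a.s.\ have unique maximizers for nondegenerate kernels. If the maximizer of $\fpar$ is not unique, I would state the result with $\liminf$ instead. The density assumption on the observations is needed only to justify the uniform-convergence hypothesis, and the proof does not otherwise use it.

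\textbf{Main obstacle.} The delicate step is Step 1–2: passing from uniform convergence of the posterior mean to convergence of its argmax. This needs compactness, continuity, and the precise reading of the hypothesis $\argmax \f{M}{} \neq \argmax \fpar$. I would handle it with the $2\delta_t$ near-optimality argument plus a compactness/subsequence argument, and I would make the uniqueness (generic) assumption explicit.
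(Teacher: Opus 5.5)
Your proposal is correct and follows essentially the same route as the paper's proof: it reduces to $\E[\fpar\mid\D_t]$ via \Cref{prop:lmc}, passes from uniform convergence to convergence of the maximizers, and concludes by continuity of $\f{M}{}$ and the premise that the limiting maximizer of $\fpar$ does not maximize $\f{M}{}$. The paper simply asserts $\hat{\vx}_t \to \vx^*_{\parallel}$ (citing density) and tacitly assumes compactness and a unique maximizer, whereas your $2\delta_t$ argument, your uniqueness caveat with the $\liminf$ fallback, and your remark that density only serves to justify uniform convergence make the same argument rigorous.
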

\vspace{0.4em}

\begin{remark}
\label{remark:miso}
Under MISO, every latent process appears in some observable fidelity (\(\Shid = \emptyset\)), therefore \(\smash{\fperp} = 0\) and \Cref{thm:lmc} does not hold.
However, this is a structural assumption on the fidelity hierarchy, and when it fails, the misspecified prior restores the suboptimality.
\end{remark}
\vspace{0.4em}

A concrete instance of the LMC problem is presented in \Cref{fig:thm}.
In this bi-fidelity example, the model has direct access to the true linear coefficients.
However, since the lower-fidelity function does not include the higher-frequency component, the posterior collapses to the observed component and the regret converges to a positive value.

The suboptimality of the LMC just presented relies on the set of non-observable latents being non-empty.
For the ICM setting, however, we show that the regret is likely irreducible even when \(\Shid = \emptyset\), as the target posterior mean converges to a fixed combination of the lower fidelities that in general does not share the optimum location with the objective.

\vspace{0.4em}
\begin{restatable}{corollary}{corollaryicm}
\label{cor:icm}
Under the ICM kernel defined in \eqref{eq:icm}, the highest-fidelity posterior mean corresponds to a fixed linear combination of the lower-fidelity posterior means, i.e., \(\mu^{(M)}_t(\vx) = \sum_{m=1}^{M-1} w_m\mu^{(m)}_t(\vx)\), where the weights \(\vw\) do not depend on the input location \(\vx\).
Furthermore, if \(\xs \neq \argmax_{\vx \in \gX} \sum_{m=1}^{M-1} w_m \f{m}{}(\vx)\), the simple regret converges to a strictly positive constant: \( \lim_{t \to \infty} (\f{M}{}(\xs) - \f{M}{}(\hat{\vx}_t)) > 0 \).
\end{restatable}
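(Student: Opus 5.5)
The plan is to exploit the separable structure of the ICM covariance. Under \eqref{eq:icm}, the prior covariance between $\f{m}{}(\vx)$ and the observations $y_i$ (all with $m_i<M$) is $\mB_{m,m_i}k(\vx,\vx_i)$. Stacking the data, the posterior mean of fidelity $m$ is
\[
\mu^{(m)}_t(\vx) = \sum_{i} \mB_{m,m_i} k(\vx,\vx_i)\,\alpha_i, \qquad \boldsymbol\alpha = (\K_{\mathcal{D}} + \sigma_n^2\mI)^{-1}\vy,
\]
where $\K_{\mathcal{D}}$ is the Gram matrix of the observed points. The vector $\boldsymbol\alpha$ is common to all output fidelities; only the row $\mB_{m,\cdot}$ changes with $m$.

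For the first claim, I would show that the row $\mB_{M,\cdot}$ restricted to the observable indices $\{1,\dots,M-1\}$ can be written as a fixed combination of the rows $\mB_{m,\cdot}$, $m<M$, restricted to the same indices. Write $\mB_{<M}$ for the leading $(M-1)\times(M-1)$ block, which is positive definite since $\mB$ is. Set $\vw^\top = \mB_{M,<M}\,\mB_{<M}^{-1}$. Then $\mB_{M,m'} = \sum_{m<M} w_m \mB_{m,m'}$ for every $m'<M$. Substituting into the expression above gives $\mu^{(M)}_t(\vx) = \sum_{m<M} w_m\mu^{(m)}_t(\vx)$ for every dataset and every $\vx$. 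The weights are the Gaussian regression coefficients of $\f{M}{}(\vx)$ on $(\f{m}{}(\vx))_{m<M}$, and they are independent of $\vx$ precisely because the input kernel $k$ factors out. This is essentially the ICM specialisation of \Cref{prop:lmc}. I would also note the equivalent latent picture: one can rewrite $\f{M}{} = \sum_{m<M} w_m \f{m}{} + r$, where $r$ is independent of all lower fidelities, so its posterior mean is zero.

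For the regret claim, I would mirror \Cref{thm:lmc}. I would assume, as there, that the lower-fidelity posterior means converge uniformly, $\mu^{(m)}_t \to \f{m}{}$ (using density of observations). Then $\mu^{(M)}_t \to g := \sum_m w_m\f{m}{}$ uniformly. With $\gX$ compact and continuous sample paths, any limit point of the selected candidates $\hat{\vx}_t$ from \eqref{eq:choice} lies in $\argmax g$. If $\xs \notin \argmax g$, which I would interpret as requiring the argmax sets to be disjoint, or the maximiser of $\f{M}{}$ to be unique and excluded from $\argmax g$, then by continuity of $\f{M}{}$ and compactness of $\argmax g$,
\[
\liminf_t \bigl(\f{M}{}(\xs) - \f{M}{}(\hat{\vx}_t)\bigr) \ge \f{M}{}(\xs) - \max_{\argmax g}\f{M}{} > 0 .
\]

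The main obstacle is this limiting step. ``Converges to a constant'' really needs either a unique maximiser of $g$ or the weaker liminf statement. I would state the result as a positive liminf, and obtain an actual limit under uniqueness of $\argmax g$ by combining uniform convergence with the standard argmax-continuity argument.
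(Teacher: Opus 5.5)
Your proposal is correct and follows essentially the same route as the paper: the same weights $\vw = \mB_{<M,<M}^{-1}\mB_{<M,M}$ are substituted into the kernel-vector form of the posterior mean, followed by the same uniform-convergence-and-continuity argument for the regret. Your extra care in the limiting step adds something the paper lacks: you state a positive $\liminf$ in general, you get an actual limit only when $\argmax g$ is a singleton, and you read the premise as disjointness of the argmax sets. The paper passes over this argmax-continuity point by writing $\lim_t \hat{\vx}_t = \argmax g$ as if the maximizer were unique.
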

\vspace{0.4em}

This effect is clear in the bi-fidelity setting, which reduces to the high-fidelity posterior mean being a simple rescaling of the lower-fidelity one.
Therefore, whenever the lower-fidelity maximizer or minimizer is not aligned with the true optima, the model will have irreducible regret. 

\section{Handling Out-of-the-Loop Fidelity}

The root of the suboptimality stated in the previous section is the absence of the target fidelity information within the loop.
Therefore, to escape this regime, it is necessary to incorporate a higher-fidelity signal in the surrogate model.
As, by definition, the highest fidelity of the current objective is not available, the solution is to incorporate data from \emph{outside} the current task. 
Fortunately, in many science and engineering problems, practitioners do not start their optimization campaign in a vacuum.
Instead, they often possess a repository of historical data from previous experiments---datasets where the highest-fidelity outcomes from those experiments have already been observed and recorded.
Note, however, that the relationship between those previous experiments and the current experiment is often ambiguous, as past experiments may come from partially relevant configurations or settings.

Based on this observation, we propose a solution for the aforementioned problem in OOL-MF-BO.
Formally, consider \(N\) sets of functions \(\smash{\set{m=1}{M}{\f{m}{n}}}\) for \(n=1...N\), where the subscript \(n\) indicates the task index (i.e., previous experiments), and we are interested in optimizing task \(N\) (i.e., current experiment). 
Assume we have historical observations \(\mathcal{H}_n = \set{t=1}{T_n}{(\vx_t, y^{(m_t)},m_t)}\), for which the highest fidelity is available. 
We denote by \(\mathcal{H} := \set{n=1}{N-1}{\mathcal{H}_n}\) the set of all historical data.
In addition, assume each task has a task context vector \(e_n\) that the optimization algorithm can access.
We assume that the previous tasks cannot be queried; only their past observations are available.

\subsection{Surrogate modelling}

To incorporate the information from the previous tasks, we replace the fidelity kernel to also include tasks:
\begin{equation*}
\begin{aligned}
\label{eq:new-kernel}
\Cov(\f{m}{n}(\vx), &\f{m'}{n'}(\vx')) = k((\vx, m, n),(\vx', m', n')) \\
&= k_\text{data}(\vx, \vx') \cdot k_\theta((m, n),(m', n')). \\
\end{aligned}
\end{equation*}
The newly introduced term \(k_\theta((m, n),(m', n'))\) is a deep kernel~\citep{wilson2016deep} that operates on fidelity indices and task features:
\begin{equation*}
k_\theta((m, n),(m', n')) = k_\text{RBF}(\gE(m, e_n;\theta),\gE(m', e_{n'};\theta)),
\end{equation*}
where \(\gE\) is a neural network parametrized by \(\theta\).

We use a deep kernel as it can flexibly learn complex, nonlinear relationships between tasks and fidelities. This is particularly relevant in our setting, as the task features---whether manually defined or extracted via foundation models---can interact with fidelity indices in nontrivial ways.
This allows the model to reliably identify which information from the previous tasks is relevant to the current objective, as not all previous data is equally relevant.

\Cref{fig:task-transfer} illustrates how offline data from previous tasks can improve the modelling of the incumbent function and break the linear combination effect previously stated.
Data from historical tasks is taken into account in the modelling of the current objective and its lower-fidelity functions.

\subsection{Acquisition function}

The proposed method does not impose restrictions on the choice of acquisition function.
Once the historical dataset \(\mathcal{H}\) is considered in the surrogate model, any multi-fidelity acquisition function such as MF-MES~\citep{takeno2020mfmes} or MF-EI~\citep{Huang2006cokriging} can be used.
Importantly, the selected acquisition function should remain formulated to optimize the unobserved highest-fidelity objective.
At each step of the optimization loop, the next point and fidelity queried are given by 
\begin{equation*}
\vx_{t+1}, m_{t+1} = \argmax_{\vx \in \gX, m \in [M-1]} \alpha(\vx, m;\train_t,\mathcal{H}).
\end{equation*}
The main change from the usual MF-BO is the exclusion of the highest fidelity level from the optimization domain.

\subsection{Task Features}
\label{sec:task-feat}

Although some optimization problems may have natural task features, this is not the case for all problems.
In fact, many real-world problems can be better explained in words than through a hand-crafted feature vector.
A natural solution to this is to make use of foundation models trained on a wide variety of modalities and domains.
An example of a task description that we use in our experiments is provided in \Cref{ex:task-text}.
Notably, we extract only one vector per task, representing a negligible cost.

\begin{figure}[t]
    \centering
    \begin{tcolorbox}[colback=blue!3!white,colframe=gray!75!black, fontupper=\sffamily\small, left=2mm, right=2mm,title=Hydrocarbon task |  Solvation energy benchmark]
Context: Non-polar, hydrophobic solutes. Dominant interactions: weak Van der Waals dispersive forces. Features: Zero dipole moment, lack of hydrogen bond donors or acceptors.
Chemical space: Alkanes, alkenes, and aromatics. Solvation driven by cavity formation energy.
\end{tcolorbox}
\caption{Task description example from solvation energy benchmark.
To generate the task embeddings, the description is fed to an LLM that maps it to a fixed-dimensional vector.}
\label{ex:task-text}
\end{figure}

\section{Related work}

\begin{figure*}[ht]
    \centering
    \includegraphics[width=\linewidth]{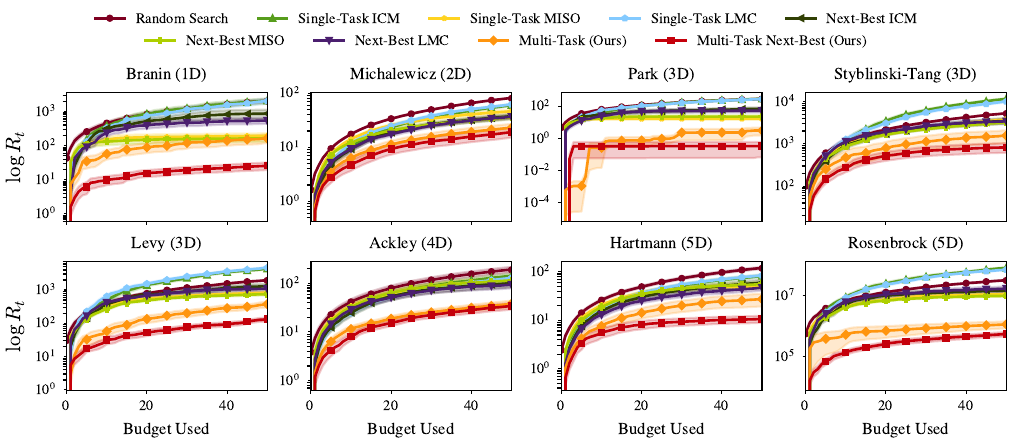}
    \caption{Performance comparison of our proposed methods on synthetic benchmark functions. Results are averaged over 20 independent trials with shading indicating the standard error.}
    \label{fig:res-synth}
\end{figure*}

\paragraph{Multi-task Bayesian Optimization}

Multi-task BO was originally proposed by \citet{swersky2013mtbo} as a solution to transfer knowledge across tasks to speed up the optimization of a related task.
In their work, they use an ICM kernel, learning the correlation between tasks directly by maximum likelihood estimation.
Alternatively, \citet{feng2020contextual} proposed the latent embedding multi-output kernel, in which the external task embeddings are fed to an RBF kernel function.
When the task embeddings are extracted by a neural network, this is closely related to deep kernel methods~\citep{wilson2016deep,zhang2025deepmfbo}; however, in this case, the network weights are frozen.
More recently, BOLT~\citep{zeng2025largescale} was introduced as a framework for using large language models (LLMs) to improve BO transfer across tasks. 
In their framework, an LLM is fine-tuned to produce a good initialization from task-specific context.

\citet{zhang2025multifidelity} is the only work to combine multi-fidelity and multi-task BO.
Their work differs from ours in three main ways: (i) the highest fidelity is always observable, (ii) the acquisition function used also weighs exploration for future tasks when selecting the next query point, and (iii) correlations among tasks arise from the i.i.d. assumption over task parameters rather than an explicit task kernel acting on the task contexts.

\paragraph{Bayesian Optimization Without Access to the Objective Function}
The central characteristic of our problem setting is not having access to the highest fidelity in the BO loop.
In contrast, prior work has examined other settings, characterized by different constraints and structural assumptions on the objective function.
\citet{zhang2025indirectquery} investigated the setting where, instead of directly selecting a point, one selects an action that is linked to the input space via a conditional distribution.
Related to the multi-fidelity setting, \citet{mikkola2023multi} explored the problem of unreliable fidelities and proposed a robust algorithm for MF-BO.
Furthermore, another related area of work is offline model-based optimization~\citep{kim2026offlineoptim}, which leverages powerful surrogate and generative models to optimize a function in a completely offline setting.
In contrast, we study the distinct challenge of OOL-MF-BO, in which the highest fidelity of the objective function is removed from the optimization loop, and lower fidelities are queried directly.

\section{Experiments}
\label{sec:experiments}

\begin{figure*}[t]
    \centering
    \includegraphics[width=\textwidth]{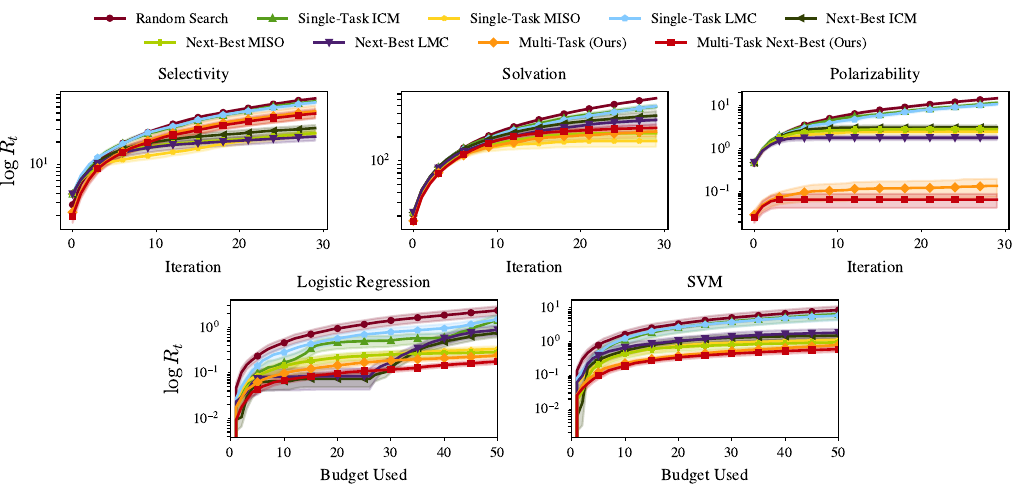}
    \caption{\textbf{(Top)} Performance comparison of all methods on three molecular optimization benchmarks: Xe/Kr selectivity, solvation energy, and polarizability. Results are averaged over 10 trials. 
    \textbf{(Bottom)} Cumulative regret curves for logistic regression and SVM design spaces. 
    Curves are averaged across 80 trials (8 datasets, each with 10 trials).}
    \label{fig:mfchem}
\end{figure*}

We perform OOL-MF-BO experiments on both synthetic functions, molecular benchmarks, and hyperparameter optimization tasks.
To evaluate performance, we visualize the cumulative regret at each step, i.e., \(R_t = \sum_{\tau=1}^{t} \f{M}{}(\xs) - \f{M}{}(\hat{\vx}_\tau)\).
We consider the following methods:
\begin{itemize}
    \item \textsc{Random Search}: Suggestions are made randomly by picking \(\hat{\vx}_t \sim \text{Uniform}(\gX) \).
    \item \textsc{Next-Best}: The surrogate models the current task up to the highest observable fidelity. The suggested point is given by \(\hat{\vx}_t = \argmax_{\vx \in \gX} \E[\f{M-1}{}(\vx)|\train_t]\).
    \item \textsc{Single-Task}: Surrogate models all fidelities of the current task. The suggested point is given by \(\hat{\vx}_t = \argmax_{\vx \in \gX} \E[\f{M}{}(\vx)|\train_t]\).
    \item \textsc{Multi-Task}: Our proposed method in which the out-of-the-loop fidelity is maximized. The suggested point is given by \(\hat{\vx}_t = \argmax_{\vx \in \gX} \E[\f{M}{}(\vx)|\train_t, \mathcal{H}]\).
    \item \textsc{Multi-Task Next-Best}: Variation of our proposed method in which the next-best fidelity is maximized. The suggested point is given by \(\hat{\vx}_t = \argmax_{\vx \in \gX} \E[\f{M-1}{}(\vx)|\train_t, \mathcal{H}]\).
\end{itemize}

For all methods, except \textsc{Random Search}, we use a GP with a constant mean and an RBF covariance kernel scaled by the dimension~\citep{hvarfner2024vanilla} as the data kernel.
For \textsc{Single-Task} and \textsc{Next-Best}, we show results with ICM, LMC, and MISO, covering popular multi-fidelity kernels used in practice.
For the \textsc{Next-Best} variants, Appendix~\ref{apdx:additional} repeats these experiments using the observed incumbent rather than the posterior-mean maximizer, with no qualitative change.
The deep kernel network \(\gE\) is parameterized as a two-layer network with 16 hidden units and an 8-dimensional output.
The acquisition function used is MF-MES~\citep{takeno2020mfmes}.
All experiments are implemented using the BoTorch package~\citep{Balandat2019BoTorchPB}.

\subsection{Synthetic Functions}

We now evaluate the methods on synthetic functions commonly used in the MF-BO literature.
Specifically, given a set of $M$ multi-fidelity objective functions $g^{(m)} : \mathbb{R}^d \to \mathbb{R}$ for $m=1, \dots, M$, we generate a family of related tasks by sectioning the original functions along a specific input dimension. 
Let the input space be decomposed such that the first $d'=d-1$ dimensions represent the design variables $\vx \in \mathbb{R}^{d-1}$, and the $d$-th dimension represents a task-defining parameter. By selecting a discrete set of task values $\mathcal{Z} = \{z_1, \dots, z_N\}$, we define the $n$-th task at the $m$-th fidelity level, $f^{(m)}_n : \mathbb{R}^{d-1} \to \mathbb{R}$, as \(\smash{\f{m}{n}}(\vx) = \smash{g^{(m)}}(x_1, \dots, x_{d-1}, z_n)\).
In this framework, $z_n$ serves as a contextual feature. This construction ensures 
that tasks are naturally correlated, as they are derived from slices of the 
same underlying $d$-dimensional response surface $g^{(m)}$.

Under this setting, we perform experiments on multi-fidelity versions of eight synthetic functions\footnote{\url{https://www.sfu.ca/~ssurjano/optimization.html}}: Branin, Michalewicz, Park, Styblinski-Tang, Levy, Ackley, Rosenbrock, and Hartmann.
For all functions, we perform 20 trials, in each trial sampling \(M \sim \text{Uniform}(\{2,3,4,5\})\), \(N \sim \text{Uniform}(\{2,3,4,5\})\), \(|H_n| \sim \text{Uniform}(\{20,30,40,50\})\).
Given the number of tasks, the task-defining parameters are sampled uniformly.
The costs are set to \(c^{(m)}=2^{m-1}\) for \(m \leq M-1\) and \(c^{(M)} = \infty\), capturing a situation where lowering the fidelity corresponds to halving the data size or tolerance parameter~\citep{eggensperger2021hpobench}.
Points in the historical observations are sampled uniformly with fidelity ratio \({2^{-(m-1)}}\).
More information is presented in Appendix~\ref{apdx:details-synth}.

The results are presented in Figure~\ref{fig:res-synth}.
Across all functions, \textsc{Multi-Task} approaches improve the results and are able to perform better than \textsc{Next-Best} and \textsc{Single-Task} methods.
Interestingly, the \textsc{Multi-Task Next-Best} method is consistently better than its single-task counterpart.
This gap in performance shows the clear advantage of incorporating previous tasks in the OOL-MF-BO setting.
As expected, the \textsc{Single-Task} method shows performance comparable to \textsc{Random Search} and, at best, on par with \textsc{Next-best}, highlighting the impossibility of optimizing the unavailable fidelity directly without auxiliary data. 
For further analysis of the different components of the setting, we refer to the ablation study exploring the effects of number of tasks, task relevance, and historical data size provided in Appendix~\ref{apdx:ablation}.

\subsection{Real-World Applications}

\paragraph{Molecular Optimization}
We experiment with three molecular optimization benchmarks, adapting the MF-BO problems used in \citet{Sabanza-Gil2025} to our OOL-MF-BO setting.
The problems are (i) Xe/Kr selectivity~\citep{gantzler2023mfbo}, (ii) solvation energy~\citep{mobley2014freesolv}, and (iii) polarizability~\citep{ghahremanpour2018alexandria}.
For all benchmarks \(M=2\) and the number of steps is set to \(T=30\) for all problems.
We use the Qwen3 Embedding 8B~\citep{zhang2025qwen3embedding} model to extract task features from text descriptions of the tasks.
\Cref{ex:task-text} shows one of the textual descriptions used.
Refer to Appendix~\ref{apdx:details-real} for information on the benchmarks and tasks.

\paragraph{Hyperparameter Optimization}
We evaluate the method on two hyperparameter optimization (HPO) benchmarks from HPOBench \citep{eggensperger2021hpobench}: (i) logistic regression (LR) and (ii) support vector machine (SVM).
For each benchmark, the hyperparameter space is provided alongside the validation accuracy for different fidelity levels, with $M=5$ and $M=3$ for LR and SVM, respectively.
Following the complexity scaling of the benchmark fidelities, costs are \(c^{(m)} = 3c^{(m-1)}\) with \(c^{(1)} =1\) and the budget is set to \( \Lambda=50\).
In both benchmarks, tasks correspond to different training and validation datasets, for a total of \(N=8\) tasks.
We perform 10 trials per dataset in the role of the current task, randomly sampling only 3 of the remaining tasks to be present in the historical data for each trial.
Refer to  Appendix~\ref{apdx:details-real} for more information.

The cumulative regret results for both families of benchmarks are presented in \cref{fig:mfchem}.
Our proposed methods \textsc{Multi-Task} and \textsc{Multi-Task Next Best} show strong performance, obtaining the lowest cumulative regrets for three out of five benchmarks.
In the polarizability benchmark, the observed gains start from the strong initialization due to the knowledge transfer from historical tasks and are maintained thereafter---resulting in significantly lower cumulative regret throughout the iterations.
On the other hand, on both HPOBench tasks (LR and SVM) our methods show consistently lower instantaneous regret during the entire run, leading to better results.

The benefits from historical data are clear when comparing \textsc{Multi-Task} and \textsc{Multi-Task Next Best} with the ICM versions of \textsc{Single Task} and \textsc{Next Best}.
Even though our proposed deep kernel has a structure very similar to the ICM kernel, namely a separable data kernel, the inclusion of historical information makes the results go from equivalent to \textsc{Random Search} to competitive in all evaluated benchmarks.

The strong performance of the MISO-based methods follows from \Cref{remark:miso}, and \textsc{Next-Best} LMC on Xe/Kr selectivity is the same effect in another guise: when the maximizers of the target and next-best fidelities are closely aligned, a method that refuses to extrapolate is difficult to improve upon.
Their subpar performance on the HPO and synthetic benchmarks shows that this condition is neither universal nor checkable in advance.
Gains from modelling the out-of-the-loop fidelity directly are thus realized precisely in the regime characterized in \Cref{sec:pitfalls}.

\section{Conclusion}
In this work, we explored the out-of-the-loop multi-fidelity Bayesian optimization setting.
We started from its formal definition and proceeded by characterizing the suboptimality of tackling problems of this family using standard MF-BO methods with both LMC and ICM---even if the fidelity covariance matrix is known.
Next, we demonstrated how historical data with the highest fidelity information can be used to improve the performance in the OOL-MF-BO setting.
We showed the benefits of the proposed method across both synthetic and real-world objective functions, demonstrating the effectiveness of our approach.

\paragraph{Limitations and Future Work}
Our analysis of the OOL suboptimality is developed for the LMC family of surrogates, which, although broad, does not cover all possible kernels; extending it to more general covariance structures remains open. 
Furthermore, performance depends on the relevance of the available historical tasks, which in practice is hard to know in advance. 
A natural next step is to apply OOL-MF-BO to challenging science and engineering problems using domain-specific foundation models as low-fidelity functions, and to develop more efficient methods for handling historical data that avoid the cubic complexity of GPs.

\bibliography{bibliography}
\clearpage
\appendix

\section{Proofs}
\label{apdx:proofs}

\propositionlmc*
\begin{proof}
Using the decomposition of the objective function and the linearity of expectation, the posterior mean can be written as
\begin{equation*}
\begin{split}
 \mu^{(M)}_t &= \E[\f{M}{} \mid \D_t] \\
 &= \E[\fpar + \fperp \mid \D_t] \\
 &= \E[\fpar \mid \D_t] + \E[\fperp \mid \D_t].
\end{split}
\end{equation*}
By \Cref{def:parandperp}, the second term can be written as \(\E[\fperp \mid \D_t] = \sum_{q\in\Shid} a^{(M)}_{q}\E[u_q \mid \D_t]\); however, \(u_q \perp \D_t\) for all \(q \in \Shid\).
Therefore, \(\E[u_q \mid \D_t] = \E[u_q] = 0\), given that all latent processes have a zero-mean prior.
\end{proof}

\theoremlmc*
\begin{proof}
By \Cref{prop:lmc}, the posterior mean of the target fidelity evaluates exactly to the posterior mean of the observable component, meaning \(\E[\f{M}{}(\vx) \mid \D_t] = \E[\fpar(\vx) \mid \D_t]\).
Consequently, the candidate point selected at step \(t\) via \eqref{eq:choice} can be equivalently written as
\begin{equation*}
\hat{\vx}_t = \argmax_{\vx \in \gX} \E[\fpar(\vx) \mid \D_t].
\end{equation*}

By the assumption of uniform convergence, \(\E[\fpar \mid \D_t] \to \fpar\) as \(t \to \infty\). Since the lower-fidelity observations become dense in \(\gX\), the maximizer of the posterior mean converges to the maximizer of the limiting function. Let \(\vx^*_{\parallel} = \argmax_{\vx \in \gX} \fpar(\vx)\). It follows that
\begin{equation*}
\lim_{t \to \infty} \hat{\vx}_t = \vx^*_{\parallel}.
\end{equation*}

Let \(\xs = \argmax_{\vx \in \gX} \f{M}{}(\vx)\) be the true global optimum. We can now evaluate the limit of the simple regret as \(t \to \infty\). Assuming \(\f{M}{}\) is continuous, we have
\begin{equation*}
\lim_{t \to \infty} \left( \f{M}{}(\xs) - \f{M}{}(\hat{\vx}_t) \right) = \f{M}{}(\xs) - \f{M}{}(\vx^*_{\parallel}).
\end{equation*}

By the premise that \(\argmax \f{M}{} \neq \argmax \fpar\), the point \(\vx^*_{\parallel}\) is not a global maximizer of the true objective function \(\f{M}{}\). Therefore, the value of the target fidelity at the true optimum is strictly greater than its value at the observable optimum:
\begin{equation*}
\f{M}{}(\xs) > \f{M}{}(\vx^*_{\parallel}).
\end{equation*}
Subtracting \(\f{M}{}(\vx^*_{\parallel})\) from both sides yields \(\f{M}{}(\xs) - \f{M}{}(\vx^*_{\parallel}) > 0\), which concludes the proof that the simple regret converges to a strictly positive constant.
\end{proof}

\corollaryicm*
\begin{proof}
We begin by writing the general formula for the posterior mean for all \(m \leq M\):
\begin{equation*}
    \mu^{(m)}_t(\vx) = \vk_m(\vx)^\top (\mK + \sigma_n^2\mI)^{-1}\vy,
\end{equation*}
where \(\vk_m(\vx) = [\Cov(\f{m}{}(\vx), \f{m_i}{}(\vx_i))]_{i=1}^{N}\) is the cross-covariance vector, \(\mK = [\Cov(\f{m_i}{}(\vx_i),\f{m_j}{}(\vx_j))]_{i=1,j=1}^{N,N}\) is the covariance matrix of the observation dataset, and \(\vy = [y_i]_{i=1}^{N}\) is a vector collecting all observed responses.
Recall that, as we are dealing with the ICM kernel, \(\Cov(\f{m}{}(\vx), \f{m'}{}(\vx')) = \mB_{m, m'}k(\vx, \vx')\).

Let \(\vw = \mB_{<M,<M}^{-1}\mB_{<M,M}\), where \(\mB_{<M,<M}\) is the block of the coregionalization matrix corresponding to the observable fidelities and \(\mB_{<M,M}\) is the block of cross-terms between the observable and the non-observable fidelity.
Since \(\mB\) is positive definite, \(\mB_{<M,<M}^{-1}\), and hence \(\vw\), are well defined.

Note that this definition of \(\vw\) implies \(\sum_{m<M} w_m \mB_{m,m'} = \mB_{M,m'}\) for all observable \(m'\).
Because \(m_i < M\) holds for all points in the dataset \(\D_t\), we have
\begin{equation}
\label{eq:covrelation}
    \vk_M(\vx) = \sum_{m<M}w_m\vk_m(\vx).
\end{equation}
Finally, we write the posterior mean of the highest fidelity and substitute \eqref{eq:covrelation} for the cross-covariance term:
\begin{equation*}
\begin{split}
 \mu^{(M)}_t(\vx) &= \vk_M(\vx)^\top (\mK + \sigma_n^2\mI)^{-1}\vy \\
 &= \Big[\sum_{m<M}w_m\vk_m(\vx)\Big]^\top (\mK + \sigma_n^2\mI)^{-1}\vy \\
 &= \sum_{m<M}w_m\big[\vk_m(\vx)^\top (\mK + \sigma_n^2\mI)^{-1}\vy\big] \\
 &= \sum_{m<M}w_m\mu^{(m)}_t(\vx),
\end{split}
\end{equation*}
which concludes the first part of the proof.

It remains to establish the irreducible regret. Assume that, as \(t \to \infty\), each lower-fidelity posterior mean converges uniformly to its function, \(\mu^{(m)}_t \to \f{m}{}\) for all \(m < M\), which holds as the lower-fidelity observations become dense in \(\gX\). Combining this with the first part of the proof, the highest-fidelity posterior mean converges to the fixed combination
\begin{equation*}
\mu^{(M)}_t(\vx) \to \sum_{m<M} w_m \f{m}{}(\vx) =: g(\vx).
\end{equation*}
Consequently, the candidate point selected via \eqref{eq:choice} satisfies
\begin{equation*}
\lim_{t \to \infty} \hat{\vx}_t = \argmax_{\vx \in \gX} g(\vx) =: \vx^*_g.
\end{equation*}
Let \(\xs = \argmax_{\vx \in \gX} \f{M}{}(\vx)\) be the true global optimum. Assuming \(\f{M}{}\) is continuous, the limit of the simple regret is
\begin{equation*}
\lim_{t \to \infty} \left( \f{M}{}(\xs) - \f{M}{}(\hat{\vx}_t) \right) = \f{M}{}(\xs) - \f{M}{}(\vx^*_g).
\end{equation*}
By the premise \(\xs \neq \argmax_{\vx \in \gX} \sum_{m<M} w_m \f{m}{}(\vx) = \vx^*_g\), the point \(\vx^*_g\) is not a global maximizer of \(\f{M}{}\), so \(\f{M}{}(\xs) > \f{M}{}(\vx^*_g)\). Therefore \(\f{M}{}(\xs) - \f{M}{}(\vx^*_g) > 0\), concluding the proof.
\end{proof}
\section{Experimental Details}

\subsection{Synthetic Functions}
\label{apdx:details-synth}

Here, we provide the formula for all the multi-fidelity functions considered in our experiments.
Some multi-fidelity extensions are taken from previous work (indicated accordingly), while others are introduced in this study.

All functions have a continuous fidelity parameter \(m \in [0,1]\), where \(m=1\) recovers the original function.
For each benchmark, the fidelity is discretized into \(M\) randomly chosen levels.

Throughout, we define the fidelity shift
\[
\delta(m) = 0.1(1-m).
\]

\paragraph{Multi-fidelity Branin \citep{mikkola2023multi}:}

\begin{equation*}
\begin{aligned}
f(\vx,m)
&=
\Bigl(
x_2-(b-\delta(m))x_1^2
+cx_1-r
\Bigr)^2  \\
&\quad
+10(1-t)\cos(x_1)+10 ,
\end{aligned}
\end{equation*}

where
\[
\begin{gathered}
\vx\in[-5,10]\times[0,15],\\
b=\frac{5.1}{4\pi^2},\qquad
c=\frac{5}{\pi},\\
r=6,\qquad
t=\frac{1}{8\pi}.
\end{gathered}
\]

\paragraph{Multi-fidelity Hartmann \citep{mikkola2023multi}:}

\begin{equation*}
\begin{aligned}
f(\vx,m)
&=
-(\alpha_1-\delta(m))
\exp\!\Bigl(
-\sum_{j=1}^{6}
A_{1j}(x_j-P_{1j})^2
\Bigr)
\\
&\quad
-
\sum_{i=2}^{4}
\alpha_i
\exp\!\Bigl(
-\sum_{j=1}^{6}
A_{ij}(x_j-P_{ij})^2
\Bigr).
\end{aligned}
\end{equation*}

where
\[
\vx\in[0,1]^6,\qquad
\boldsymbol{\alpha}=(1.0,1.2,3.0,3.2),
\]
and \(A\) and \(P\) are the standard Hartmann constants.

\paragraph{Multi-fidelity Park \citep{xiong2013seqdesign}:}

\begin{equation*}
\begin{aligned}
f(\vx,m)
&=
\frac{x_1}{2}
\Biggl(
\sqrt{
1+\frac{x_4(x_2+x_3^2)}{x_1^2}
}
-1
\Biggr)
\\
&\quad
+\bigl(x_1+(3-1.5\delta(m))x_4\bigr)
\exp(1+\sin x_3).
\end{aligned}
\end{equation*}

where \(\vx\in[0,1]^4\).

\paragraph{Multi-fidelity Rosenbrock:}
(adapted from BoTorch \citep{Balandat2019BoTorchPB})

\begin{multline*}
f(\vx,m)
=
\sum_{i=1}^{d-2}
\Bigl[
100(x_{i+1}-x_i^2+\delta(m))^2
\\
+(x_i-1+\delta(m))^2
\Bigr].
\end{multline*}

where \(\vx\in[-5,10]^d\).

\paragraph{Multi-fidelity Ackley:}

\begin{equation*}
\begin{aligned}
f(\vx,m)
&=
-a\exp\!\Biggl(
-b
\sqrt{
\frac{1}{d-1}
\sum_{i=1}^{d-1}
(x_i-\delta(m))^2
}
\Biggr)
\\
&\quad
-\exp\!\Biggl(
\frac{1}{d-1}
\sum_{i=1}^{d-1}
\cos\!\bigl(c(x_i-\delta(m))\bigr)
\Biggr)
\\
&\quad
+a+e,
\end{aligned}
\end{equation*}

where
\[
\vx\in[-32.768,32.768]^d,\qquad
(a,b,c)=(20,0.2,2\pi).
\]

\paragraph{Multi-fidelity Levy:}

\begin{equation*}
\begin{aligned}
f(\vx,m)
&=
\sin^2(\pi w_1)
\\
&\quad
+
\sum_{i=1}^{d-2}
(w_i-1)^2
\bigl(1+10\sin^2(\pi w_i+1)\bigr)
\\
&\quad
+
(w_{d-1}-1)^2
\bigl(1+\sin^2(2\pi w_{d-1})\bigr),
\end{aligned}
\end{equation*}

where
\[
\vx\in[-10,10]^d,\qquad
w_i=1+\frac{x_i-1+\delta(m)}{4}.
\]

\paragraph{Multi-fidelity Michalewicz:}

\begin{equation*}
\begin{aligned}
f(\vx,m)
&=
-
\sum_{i=1}^{d-1}
\sin(x_i-\delta(m))
\\
&\qquad\times
\left[
\sin\!\left(
\frac{i(x_i-\delta(m))^2}{\pi}
\right)
\right]^{2A},
\end{aligned}
\end{equation*}

where
\[
\vx\in[0,\pi]^d,\qquad
A=10.
\]

\paragraph{Multi-fidelity Styblinski--Tang:}

\begin{multline*}
f(\vx,m)
=
\frac12
\sum_{i=1}^{d-1}
\Bigl[
(x_i-\delta(m))^4
\\
-16(x_i-\delta(m))^2
+5(x_i-\delta(m))
\Bigr].
\end{multline*}

where \(\vx\in[-5,5]^d\).

\subsection{Chemistry Applications}
\label{apdx:details-real}

The three benchmarks are taken from \citet{Sabanza-Gil2025}, where the high- and low-fidelity values are provided directly in tabular form.
For each dataset, we partition it into separate tasks and generate a textual description of each task using Gemini 3 Pro\footnote{\url{https://gemini.google.com/}}.
Such textual descriptions are passed to Qwen3 Embedding 8B \citep{zhang2025qwen3embedding}, producing task context vectors.
For the three benchmarks, we use 30 examples per historical task, sampled uniformly on each trial.
We now explain the partitions and provide the task descriptions for each benchmark.

\paragraph{Solvation energy \citep{mobley2014freesolv}:} For this dataset, we split the tasks based on the functional groups present in the molecules.
The four resulting tasks are Nitrogenous, Oxygenated, Halogenated, and Hydrocarbon.
We establish a priority hierarchy (Nitrogen > Oxygen > Halogens) to assign each molecule to one of four mutually exclusive tasks.
The following task descriptions are used:

\begin{tcolorbox}[colback=blue!3!white,colframe=gray!75!black,fontupper=\small]
    \textbf{Hydrocarbon:} ``Context: Non-polar, hydrophobic solutes. Dominant interactions: weak Van der Waals dispersive forces. Features: Zero dipole moment, lack of hydrogen bond donors or acceptors. Chemical space: Alkanes, alkenes, and aromatics. Solvation driven by cavity formation energy.''
    
    \textbf{Oxygenated:} ``Context: Polar, hydrophilic solutes. Dominant interactions: Strong hydrogen bonding and permanent dipole-dipole attraction. Features: High electronegativity difference, presence of hydroxyl (OH) or carbonyl (C=O) groups. Chemical space: Alcohols, ethers, ketones.''
    
    \textbf{Nitrogenous:} ``Context: Basic, polar solutes. Dominant interactions: Proton acceptance and electrostatic contributions. Features: Presence of lone pairs on Nitrogen, variable hybridization (sp2/sp3). Chemical space: Amines, amides, nitriles. Solvation driven by specific H-bond networks.''
    
    \textbf{Halogenated:} ``Context: Lipophilic, polarizable solutes. Dominant interactions: Electrostatic sigma-hole interactions and halogen bonding. Features: Large atomic radii, high surface area, 'soft' electron clouds. Chemical space: Alkyl halides. Solvation driven by entropy and polarizability.''

\end{tcolorbox}

\paragraph{Xe/Kr selectivity \citep{gantzler2023mfbo}:} We partition this dataset based on the pore diameter of each instance.
This results in 3 tasks: Small Pore (\(\mathring{A} < 15\), Medium Pore (\(15 \leq \mathring{A} \leq 25\)), and Large Pore (\(\mathring{A} > 25\)).
The following task descriptions are used:

\begin{tcolorbox}[colback=blue!3!white,colframe=gray!75!black,fontupper=\small]
    \textbf{Small Pore:} ``Context: High confinement regime (< 15 Angstroms). Mechanism: Steric sieving and overlap of potential energy surfaces. Geometry: Dense packing, high framework density, restricted void fraction. Separation driven by repulsive forces and size exclusion.''
    
    \textbf{Medium Pore:} ``Context: Optimal adsorption regime (15-25 Angstroms). Mechanism: Strong guest-host affinity without steric penalty. Geometry: Mesoporous, balanced void fraction. Separation driven by attractive Van der Waals wells and surface interactions.''
    
    \textbf{Large Pore:} ``Context: Bulk fluid regime (> 25 Angstroms). Mechanism: Diffusion-dominated transport, weak confinement. Geometry: Open frameworks, extremely high void fraction, low crystal density. Separation driven by pore wall surface area rather than pore size.''

\end{tcolorbox}

\paragraph{Polarizability \citep{ghahremanpour2018alexandria}:} For this dataset, we split the tasks based on the elemental composition of the molecules.
The four resulting tasks are Hetero-Halogens, CHN Compounds, CHO Compounds, and Hydrocarbons.
We established a priority hierarchy (Hetero-Halogens [F, Cl, Br, I, S, P] > Nitrogen > Oxygen) to assign each molecule to one of four mutually exclusive tasks.
The following task descriptions are used:

\begin{tcolorbox}[colback=blue!3!white,colframe=gray!75!black,fontupper=\small]
    \textbf{Hydrocarbon:} ``Context: Saturated carbon frameworks. Electronic environment: Hard electron clouds, high HOMO-LUMO gap. Polarizability source: Strictly volumetric scaling (size-dependent). Absence of permanent dipoles or lone pairs.''
    
    \textbf{CHO Compound:} ``Context: Oxygenated carbon frameworks. Electronic environment: High electronegativity contrast, distorted electron density. Polarizability source: Localized electron density around Oxygen atoms, permanent dipole contributions.''
    
    \textbf{CHN Compound:} ``Context: Nitrogenous carbon frameworks. Electronic environment: Lone pair donation, resonance effects. Polarizability source: Mobile pi-electrons and orbital hybridization changes (sp/sp2/sp3).''
    
    \textbf{Hetero-Halogens:} ``Context: Heavy atom substituted frameworks. Electronic environment: Diffuse, soft orbitals (d-orbital participation for S/P). Polarizability source: High deformability of large valence shells (I, Br, S, P). Significant Van der Waals radii overlap.''

\end{tcolorbox}

\begin{figure*}[t]
  \centering
  \includegraphics[width=\textwidth]{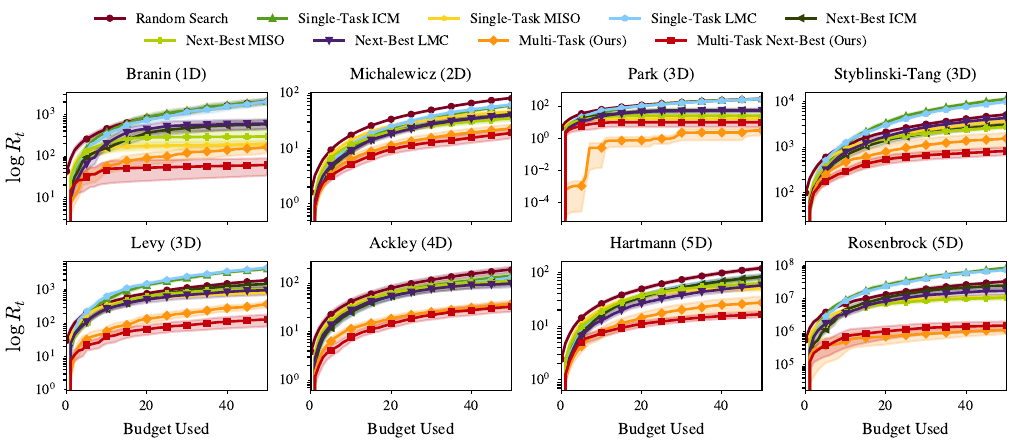}
  \caption{Synthetic benchmarks with the \textsc{Next-Best} methods reporting the best observed point instead of the posterior-mean maximizer. All other methods are unchanged from \cref{fig:res-synth}.}
  \label{fig:incumbent-synth}
\end{figure*}

\subsection{Hyperparameter Optimization Applications}
\label{apdx:details-real}

The logistic regression (LR) and support vector machine (SVM) from \citet{eggensperger2021hpobench} are used. 
The LR design space consists of the regularization strength $\alpha$ and the initial learning rate $\eta_0$, both within $[10^{-5},10^{0}]$; the SVM design space consists of the penalty parameter $C$ and the kernel coefficient $\gamma$, both within $[2^{-10},2^{10}]$.
All four hyperparameters are optimized in log space. 
Both benchmarks provide a single fidelity dimension, discretized by the original authors. For LR it is the number of SGD iterations, taking the five values $\{10, 37, 111, 333, 1000\}$, with the subsample fidelity fixed to its default value of $1$; for SVM it is the fraction of the training data used, taking the three values $\{1/9, 1/3, 1\}$.

We use the following 8 tasks: 10101, 53, 146818, 146821, 9952, 146822, 31, and 3917. 
Each task is represented by the meta-features of its underlying OpenML~\citep{Vanschoren2014openml} dataset, retaining only those meta-features that are available for all tasks and non-constant across them.
The resulting 20-dimensional feature vector is then standardized to zero mean and unit variance per dimension across the task set.
Each of the 8 tasks serves as the target task in turn, and for each run we uniformly sample 3 of the remaining 7 tasks as the historical tasks, so that no method is given access to the full task set at once.
Each historical task contributes 30 observations, uniformly distributed across its fidelity levels, and the target task is seeded with 5 observations at its lowest fidelity. The historical subset and all initial observations are resampled independently for each of the 10 runs.

\subsection{Computational Resources}

Experiments are implemented in Python using the BoTorch~\citep{Balandat2019BoTorchPB} library version 0.16.1.
The accompanying code contains all the software requirements and installation instructions.
All experiments are executed on a single NVIDIA L40S using \texttt{float64}, as suggested by BoTorch's official documentation.
1
\section{Additional Results}
\label{apdx:additional}

Throughout the paper, all methods report the maximizer of the posterior mean of their target fidelity.
An alternative convention is to report the best point observed so far, \(\hat{\vx}_t = \arg\max_{\vx_i \in \mathcal{D}_t} y_i\).
This rule is only available to methods that observe the fidelity they report on, i.e., the \textsc{Next-Best} variants; the remaining methods target the out-of-the-loop fidelity, which is never observed in the loop.

\Cref{fig:incumbent-synth,fig:incumbent-real} repeat the experiments of \Cref{sec:experiments} with the \textsc{Next-Best} methods using the observed incumbent.
All other methods and all experimental settings are unchanged.

\begin{figure*}[t]
  \centering
  \includegraphics[width=\textwidth]{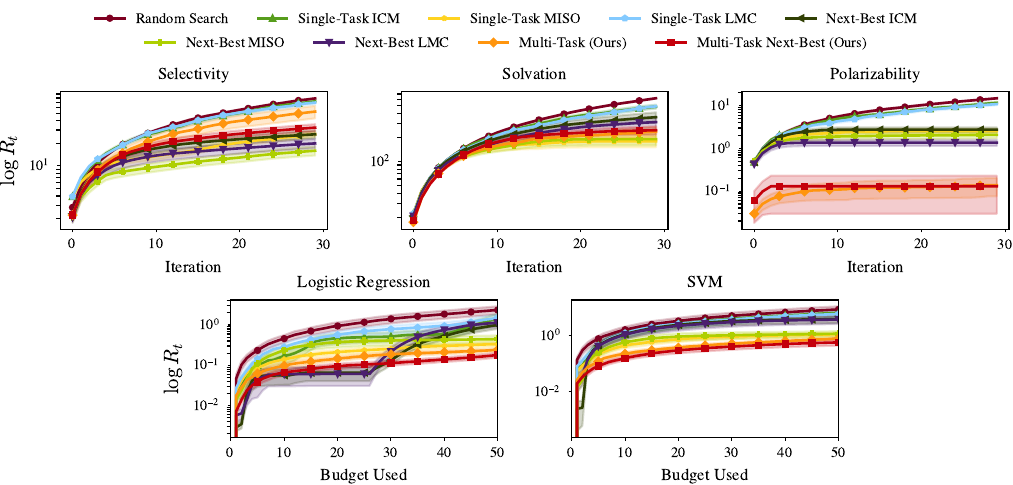}
  \caption{Real-world benchmarks under the observed-incumbent rule. All other methods are unchanged from \cref{fig:mfchem}.}
  \label{fig:incumbent-real}
\end{figure*}

\section{Ablation Studies}
\label{apdx:ablation}

\begin{figure*}[t]
    \centering
    \includegraphics[width=\linewidth]{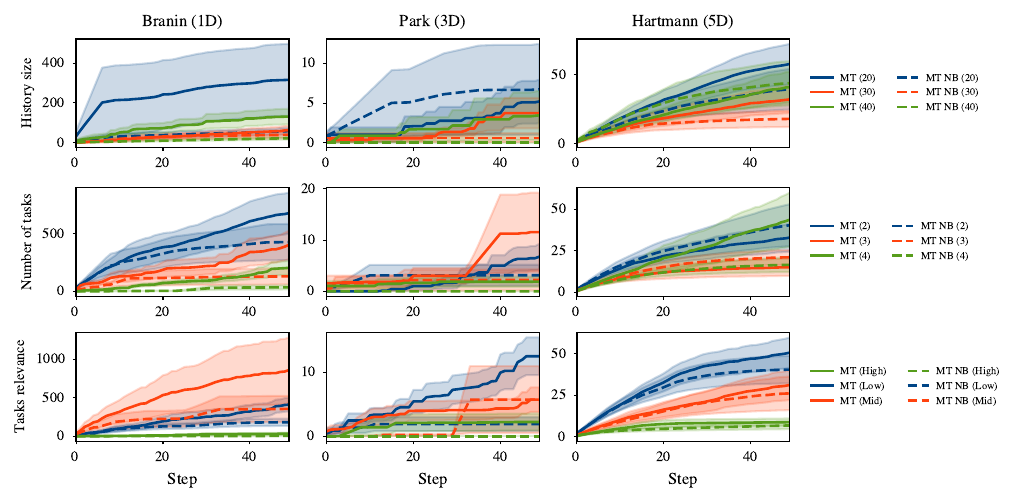}
    \caption{Results of the ablation study of our proposed methods, Multi-task (MT) and Multi-task Next-Best (MT-NB) on Branin, Park and Hartmann synthetic functions. We vary the historical dataset size (\textbf{top}), the number of tasks (\textbf{middle}), and historical tasks' relevance (\textbf{bottom}). Results are averaged over 10 independent trials.}
    \label{fig:ablation}
\end{figure*}

In this section, we evaluate the impact of different components of our method on the Branin, Park, and Hartmann synthetic functions described in Section~\ref{sec:experiments}. Unless stated otherwise, we use the same setting as before and perform 10 trials per configuration.

\paragraph{Historical dataset size}
We first analyze the effect of the historical dataset size (first row of \cref{fig:ablation}). We vary $|H_n| \in \{20,30,40\}$ for all previous tasks. For lower-dimensional problems (Branin and Park), increasing the amount of historical data consistently reduces cumulative regret. For Hartmann, however, the results are mixed and do not show a clear trend.

\paragraph{Number of tasks}
Next, we vary the number of available historical tasks, $N \in \{2,3,4\}$.
The task-defining parameters are sampled $z_n$ uniformly. For Branin and Park, having more historical tasks clearly improves performance. For Hartmann, the effect is less conclusive, likely because task diversity increases with dimensionality.

\paragraph{Auxiliary Task Relevance}
Finally, we examine the effect of task similarity. We sample the incumbent task parameter as $z_N \sim \text{Uniform}(0,1)$ and draw the remaining parameters $z_1,\dots,z_{N-1}$ under low, medium, or high relevance. Low relevance is defined by $p_{\text{low}}(z \mid z_N) \propto |z - z_N|^2$, medium by $p_{\text{mid}}(z \mid z_N) = \text{Uniform}(0,1)$, and high by $p_{\text{high}}(z \mid z_N) = \mathcal{N}(z_N, 0.1^2)$. As shown in the last row of \cref{fig:ablation}, highly relevant historical tasks substantially improve optimization performance. Moreover, as dimensionality increases, the differences between low, medium, and high relevance become more pronounced.

% Check whether the conference requires a reproducibility checklist to be included in the paper.
% If so, you can uncomment the following line and ajust the path to include it.
% \input{ReproducibilityChecklist.tex}

\end{document}